  \providecommand\BibTeX{{%
    \normalfont B\kern-0.5em{\scshape i\kern-0.25em b}\kern-0.8em\TeX}}}
\begin{document}

%%
%% The "title" command has an optional parameter,
%% allowing the author to define a "short title" to be used in page headers.
\title{Conformalized Link Prediction on Graph Neural Networks}

%%
%% The "author" command and its associated commands are used to define
%% the authors and their affiliations.
%% Of note is the shared affiliation of the first two authors, and the
%% "authornote" and "authornotemark" commands
%% used to denote shared contribution to the research.
\author{Tianyi Zhao}
%\authornote{Both authors contributed equally to this research.}
\email{tzhao566@usc.edu}
%\orcid{1234-5678-9012}
%\author{G.K.M. Tobin}
%\authornotemark[1]
%\email{webmaster@marysville-ohio.com}
\affiliation{%
  \institution{University of Southern California}
  %\streetaddress{P.O. Box 1212}
  \city{Los Angeles}
  %\state{California}
  \country{USA}
  %\postcode{43017-6221}
}

\author{Jian Kang}
\email{jian.kang@rochester.edu}
\affiliation{%
  \institution{University of Rochester}
  %\streetaddress{1 Th{\o}rv{\"a}ld Circle}
  \city{Rochester}
  \country{USA}}
%\email{larst@affiliation.org}

\author{Lu Cheng}
\email{lucheng@uic.edu}
\affiliation{%
  \institution{Univeristy of Illinois Chicago}
  \city{Chicago}
  \country{USA}
}

%%
%% By default, the full list of authors will be used in the page
%% headers. Often, this list is too long, and will overlap
%% other information printed in the page headers. This command allows
%% the author to define a more concise list
%% of authors' names for this purpose.
\renewcommand{\shortauthors}{Tianyi Zhao, Jian Kang, \& Lu Cheng}

%%
%% The abstract is a short summary of the work to be presented in the
%% article.
\begin{abstract}
Graph Neural Networks (GNNs) excel in diverse tasks, yet their applications in high-stakes domains are often hampered by unreliable predictions. Although numerous uncertainty quantification methods have been proposed to address this limitation, they often lack \textit{rigorous} uncertainty estimates. This work makes the first attempt to introduce a distribution-free and model-agnostic uncertainty quantification approach to construct a predictive interval with a statistical guarantee for GNN-based link prediction. We term it as \textit{conformalized link prediction.} Our approach builds upon conformal prediction (CP), a framework that promises to construct statistically robust prediction sets or intervals. There are two primary challenges: first, given dependent data like graphs, it is unclear whether the critical assumption in CP --- exchangeability --- still holds when applied to link prediction. Second, even if the exchangeability assumption is valid for conformalized link prediction, we need to ensure high efficiency, i.e., the resulting prediction set or the interval length is small enough to provide useful information. To tackle these challenges, we first theoretically and empirically establish a permutation invariance condition for the application of CP in link prediction tasks, along with an exact test-time coverage. Leveraging the important structural information in graphs, we then identify a novel and crucial connection between a graph's adherence to the power law distribution and the efficiency of CP. This insight leads to the development of a simple yet effective sampling-based method to align the graph structure with a power law distribution prior to the standard CP procedure. Extensive experiments demonstrate that for conformalized link prediction, our approach achieves the desired marginal coverage while significantly improving the efficiency of CP compared to baseline methods. 
Our code is available in \url{https://github.com/Aliciaa-svg/CLP}.
\end{abstract}
% For example, GNN-based link prediction used to help identify potential interactions between different drugs may not be trusted due to its overconfident predictions.
%%
%% The code below is generated by the tool at http://dl.acm.org/ccs.cfm.
%% Please copy and paste the code instead of the example below.
%%

\begin{CCSXML}
<ccs2012>
<concept>
<concept_id>10010147.10010178</concept_id>
<concept_desc>Computing methodologies~Artificial intelligence</concept_desc>
<concept_significance>500</concept_significance>
</concept>
</ccs2012>
\end{CCSXML}

\ccsdesc[500]{Computing methodologies~Artificial intelligence}

%%
%% Keywords. The author(s) should pick words that accurately describe
%% the work being presented. Separate the keywords with commas.
\keywords{Graph Neural Networks, Uncertainty Quantification, Conformal Prediction, Link Prediction}

%% A "teaser" image appears between the author and affiliation
%% information and the body of the document, and typically spans the
%% page.

%\received{20 February 2007}
%\received[revised]{12 March 2009}
%\received[accepted]{5 June 2009}

%%
%% This command processes the author and affiliation and title
%% information and builds the first part of the formatted document.
\maketitle

\section{Introduction}
GNNs have emerged as a versatile and powerful model that can operate on graph-structured data, such as social networks~\cite{fan2019graph}, molecular graphs~\cite{jiang2021could}, and knowledge graphs~\cite{liu2021indigo}. Their ability to model complex relationships in graph-structured data has propelled them to the forefront of machine learning research. 
However, one of the major challenges in applying GNNs to real-world problems is the lack of reliable uncertainty estimates for their predictions. A series of recent research has shown mixed results regarding the performance of GNNs~\cite{wang2021confident,hsu2022makes,liu2022confidence}. For example, when used in high-stakes domains such as drug discovery and finance, GNN-based link prediction may not be trusted due to its miscalibrated confidence. 

This work studies uncertainty quantification for GNN-based link prediction. One prominent approach is to construct prediction sets or intervals that provide information about a plausible range of values within which the true outcome is likely to fall. Numerous methods have been put forth to achieve this goal~\cite{wang2021confident,hsu2022makes,zhang2020mix,lakshminarayanan2017simple}. Nevertheless, these methods fall short in terms of offering both theoretical and empirical assurances concerning their validity. Conformal prediction (CP)~\cite{vovk2005algorithmic} has emerged as a promising framework to tackle these limitations and has been applied to various domains, such as natural language processing \cite{giovannotti2022calibration,schuster2022confident,ren2023robots}, causal inference \cite{lei2021conformal}, computer vision~\cite{angelopoulos2021uncertainty,bates2021distribution,belhasin2023principal} and drug discovery~\cite{JMLR:v24:22-1176}. 
It is a framework that promises to construct prediction sets or intervals while ensuring a statistically robust coverage guarantee. That is, given a user-specified miscoverage rate $\alpha\in(0,1)$, CP uses a so-called calibration set of data to produce prediction intervals (often for regression) or prediction sets (often for classification) for the test data, and the resulting set or interval covers the true label or value with probability at least $1-\alpha$. Or, the constructed prediction sets/intervals are theoretically proven to have a guarantee that they will only miss the test outcomes in at most an $\alpha$ fraction of cases.

Further, CP offers the advantage of being compatible with any black-box machine learning model, under the condition that the data follows the principle of statistical exchangeability (e.g., the calibration and test data are exchangeable in conformal prediction). This flexibility alleviates the need for the often violated assumption of independent and identically distributed (i.i.d.) data, particularly common in graph-structured datasets. 
With its simple formulation, weaker assumption, strong theoretical guarantee and
distribution-free nature, a few recent efforts ~\cite{huang2023uncertainty,zargarbashi2023conformal,clarkson2023distribution} have explored to use CP to quantify uncertainty for graph-structured data, with a particular emphasis on tasks like node classification.
Complementary to these prior works, we explore the realm of CP for link prediction, which is related yet inherently different from node classification tasks. 
To illustrate the importance, consider the GNN-based recommender system in a pharmacy store that suggests Over-the-Counter (OTC) medicine to patients. When the system over-confidently recommends inappropriate medicine, patients can be exposed to high risks of adverse effects. In this case, the rigorous prediction interval produced by CP under a predefined error rate (say 10\%) can help assess the reliability of the system.
In this case, a larger interval indicates higher uncertainty, highlighting the need for caution and possibly consulting clinicians for a more informed decision.

In this work, we study a novel problem of \textit{conformalized link prediction (CLP)}. A central challenge arises from the question of whether the critical condition of exchangeability still holds when performing CP at the edge level. In response to this challenge, this work first seeks to thoroughly examine the validity of the exchangeability assumption in GNN-based link prediction. Particularly, we formally define this problem within an inductive setting, where calibration and test edges are excluded from the training process. We then theoretically examine the exchangeability between calibration and test data for link prediction, i.e., whether the distributions of calibration and test data are exchangeable under any permutation. In CP, when the exchangeability assumption is satisfied, the coverage is statistically guaranteed. However, we also need to ensure that the prediction set or the interval length is small enough to be informative. For example, CP can output a trivial interval or set that includes all possible labels, resulting in useless predictions. Existing approaches (e.g., \cite{angelopoulos2020uncertainty,zaffran2022adaptive}) for improving efficiency are inapplicable due to the unique features of graph data.

To address this challenge, we propose to leverage structural properties, one of the most important and unique features in graph data. Graph structures provide vital information and have been shown extremely useful for a variety of graph-related tasks, such as node classification \cite{wu2022nodeformer}, link prediction \cite{yang2021inductive}, and graph classification \cite{lee2018graph}. Therefore, we ask \textit{whether graph structures can provide additional information to improve the efficiency of standard CP for CLP}? One particular type of structural information we investigate is the node degree and its distribution. As a fundamental property in graphs, node degree reflects the connectivity of a node within the graph and provides valuable insights into the structure, function, and behavior of networks \cite{newman2001random}. Informed by this, we conduct a series of empirical analyses and identify an interesting finding: a greater adherence to the power law in the node degree distribution typically leads to significantly increased CP efficiency (\textbf{Figure}. \ref{fig:simulation}). This inspiration drives us to propose a simple approach for harmonizing the degree distribution of a graph with a power-law distribution for more efficient CLP. This is achieved by selectively removing specific edges and utilizing the remaining edges for the CLP process.

In summary, our main contributions are:
\vspace{-2mm}
\begin{itemize}
    \item We propose a novel problem of CLP on GNNs and theoretically establish the condition of exchangeability for CLP, affirming the validity of employing CP for CLP. 
    \item We develop a novel pipeline for efficient CLP via a simple sampling-based approach guided by the fundamental power law distribution of node degrees.
    \item We evaluate the proposed method on real-world graph datasets for the link prediction task. The experimental results suggest that our approach can significantly improve CP's efficiency, especially when the degree distribution in a graph is less adherent to the power law distribution. 
\end{itemize}

\section{Preliminary}
\noindent \textbf{Notation.} Consider a graph $\mathcal{G}=(\mathcal{V,E}_p)$ with $M$ nodes, where $\mathcal{V}=\{1,\cdots,M\}$ and $\mathcal{E}_p=\{e_1,\cdots,e_N \}\subseteq \mathcal{V}\times\mathcal{V}$. 
$X\in\mathbb{R}^{M\times d}$ denotes the node feature matrix.
Let $\mathcal{E}_p$ and $\mathcal{E}_n$ be the positive links set and non-existent links set, respectively. 
The latter is constructed by randomly selecting the same number of non-existent links as the number of positive links from the graph.
$\mathcal{E}=\mathcal{E}_p\cup\mathcal{E}_n=\{e_1,\cdots ,e_{2N}\}$. Each $e\in\mathcal{E}$ is represented as a node pair  $e=(u,v)$ with nodes $u$ and $v$ as its two endpoints.

\vspace{1mm}
\noindent \textbf{The Link Prediction Problem.} Link prediction with GNNs is typically based on representation learning~\cite{acar2009link,kipf2016variational,pan2022neural}. Particularly, we start by obtaining the node representations. Then we derive edge representations based on the learned node representations, often using operations like the dot product between two node representations. These edge representations can then be employed to estimate the likelihood of a link between them. 
GNNs are employed to acquire node representations that encode both the topological structure and the feature information associated with each node.
We measure the performance of a link prediction model by how well it can rank the true links higher than the false ones in the test set. A common metric for this is the ratio of true links that are among the top $K$-ranked links by the model~\cite{hu2020open}.
It is expected that the ranking of scores for positive edges will surpass that of non-existent edges.

\vspace{1mm}
\noindent \textbf{Conformal Prediction.} Conformal prediction (CP) is a distribution-free framework in machine learning and statistical modeling that assigns valid confidence estimates or prediction intervals to the output of predictive models \cite{vovk2005algorithmic}. One of the most common CP methods is split CP~\cite{lei2018distribution}. It acts as a wrapper around a trained base model and uses a set of exchangeable held-out (or calibration) data to construct prediction intervals.

Given an exchangeable set of held-out calibration data
$\{(X_i, Y_i)\}_{i=1}^n$, the goal of CP is to construct a marginal distribution-free prediction interval $C(X_{n+1})\in \mathbb{R}$ that is likely to encompass the unknown response $Y_{n+1}$ with a specified miscoverage rate $\alpha\in[0,1]$: 
\begin{equation}
    \mathbb{P}\{Y_{n+1}\in C(X_{n+1})\} \ge 1-\alpha.
\end{equation}
To achieve this, we first define a non-conformity score function $V:\mathcal{X}\times\mathcal{Y}\rightarrow \mathbb{R}$, which measures the calibration of the prediction for a specific sample, i.e., how true value $y$ conforms to model prediction at $x$.
For each $(X_i,Y_i)$ in the calibration set, we first compute the non-conformity score $V(X_i,Y_i)$.
Next, we define $\widehat{q}$ to be the $\lceil (n+1)(1-\alpha)\rceil / n$-th empirical quantile of $\{V(X_1,Y_1),\cdots,V(X_n,Y_n)\}$. The prediction interval can then be constructed as follows:
\begin{equation}
    C(X_{n+1})=\{y\in\mathcal{Y}: V(X_i,y)\le\widehat{q}\}.
\end{equation}

The Conformalized Quantile Regression (CQR) method~\cite{romano2019conformalized} distinguishes itself as a widely recognized CP technique for creating prediction intervals due to its simplicity and effectiveness.
To apply CQR, we divide the data into a training set $\mathcal{D}_1$ and a calibration set $\mathcal{D}_2$. 
Next, we employ a quantile regression function denoted as $\mathcal{A}$ to fit two conditional quantile functions, namely $\widehat{\mu}_{\alpha/2}$ and $\widehat{\mu}_{1-\alpha/2}$, utilizing the training set.
Subsequently, we compute the non-conformity scores using the calibration set: 
\begin{equation}
    V_i=\max\{\widehat{\mu}_{\alpha/2}(X_i)-Y_i, Y_i-\widehat{\mu}_{1-\alpha/2}(X_i)\},
\end{equation}
for each $(X_i,Y_i)\in \mathcal{D}_2$. 
The scores are then used to calibrate the plug-in prediction interval 
\begin{equation}
    \widehat{C}(x)=[\widehat{\mu}_{\alpha/2}(x),\widehat{\mu}_{1-\alpha/2}(x)].
\end{equation}
More specifically, let $\widehat{q}$ be the $\lceil (|\mathcal{D}_2|+1)(1-\alpha)\rceil / |\mathcal{D}_2|$-th empirical quantile of $\{V(X_1,Y_1),\cdots,V(X_{|\mathcal{D}_2|},Y_{|\mathcal{D}_2|})\}$, the prediction interval for a new input data $X'$ is then constructed as 
\begin{equation}
    C(X')=[\widehat{\mu}_{\alpha_{\alpha/2}}(X')-\widehat{q}, \widehat{\mu}_{\alpha_{1-\alpha/2}}(X')+\widehat{q}].
\end{equation}
% $C(X')=$ $[\widehat{\mu}_{\alpha_{\alpha/2}}(X')-\widehat{q}, \widehat{\mu}_{\alpha_{1-\alpha/2}}(X')+\widehat{q}]$.

\section{Conformalized Link Prediction}
We begin this section by formulating and investigating the validity of conformalized link prediction (CLP), i.e., whether the exchangeability assumption holds for GNN-based link prediction.
It should be noted that this is a critical step for ensuring the statistical guarantee of CP. Yet, there is no prior work that formally studies CP in the context of link prediction, and the adaptation of CP to CLP is nontrivial. Unlike~\cite{huang2023uncertainty}, which explored CP for node classification in a \textit{transductive} setting, our work establishes CP for link prediction within an \textit{inductive} learning framework.
Then we introduce how to leverage Conformalized Quantile Regression (CQR) \cite{romano2019conformalized} for CLP, and further investigate the relationship between the efficiency of CP and the graph's structural property. Based on the empirical analysis, we propose a simple and effective sampling strategy guided by the fundamental power law distribution of node degrees to improve the efficiency of CLP.

\subsection{Exchangeability and Validity of Conformalized Link Prediction}
\label{sec:exchang}
The link prediction problem discussed here naturally fits into an inductive learning framework. To elaborate, we initially divide the set of links, denoted as $\mathcal{E}$, into distinct subsets: the training set ($\mathcal{D}_{train}$), the validation set ($\mathcal{D}_{val}$), the calibration set ($\mathcal{D}_{calib}$), and the test set ($\mathcal{D}_{test}$). Each of these subsets contains an equal number of positive links (indicating existing connections) and negative links (indicating non-existent connections). The GNN model is then trained on a subgraph denoted as $\mathcal{G}'=(\mathcal{V},\mathcal{E}')$, where $\mathcal{E}'=\mathcal{E}_p\cap(\mathcal{D}_{train}\cup\mathcal{D}_{val})$. In other words, the model can access information about all the nodes and their associated features, but it only has access to a portion of the positive links that belong to the training and validation link sets. The objective is to train the model to predict the edges that have not been observed between pairs of nodes. During the training process, we assign label 1 (or 0) to represent positive (or non-existent) edges. The GNN model begins by generating embeddings for edges through message passing and neighborhood aggregation. Subsequently, it produces prediction scores for all edges, which can be used to determine the likelihood of an edge existing between node pairs.

Since the model lacks access to the labels (indicating the status) of links in $\mathcal{D}_{calib}\cup\mathcal{D}_{test}$, any link from this combined set is equally likely to be part of either $\mathcal{D}_{calib}$ or $\mathcal{D}_{test}$. In other words, different choices of calibration sets do not alter the non-conformity scores for any given link. Using GNNs for link prediction on graphs thus adheres to the following permutation invariance condition:
For any permutation $\pi$ of $\mathcal{D}_{calib}\cup\mathcal{D}_{test}$, the non-conformity score $V$ satisfies the following: 
    \begin{equation*}
    \begin{aligned}
        &V(e,y;\{(e_i,y_i)\}_{e_i\in\mathcal{D}_{train}\cup\mathcal{D}_{val}},\{e_i\}_{e_i\in\mathcal{D}_{calib}\cup\mathcal{D}_{test}},\mathcal{G}')\\
        &=V(e,y;\{(e_i,y_i)\}_{e_i\in\mathcal{D}_{train}\cup\mathcal{D}_{val}},\{e_{\pi(i)}\}_{e_{\pi(i)} \in\mathcal{D}_{calib}\cup\mathcal{D}_{test}},\mathcal{G}').
    \end{aligned}
    \end{equation*}  
This states that permuting the order of links within the calibration and test sets does not alter their corresponding non-conformity scores.
%\lu{explain what this long equation means.} 
Therefore, the exchangeability of $\mathcal{D}_{calib}\cup\mathcal{D}_{test}$ is naturally satisfied. To this end, we can present the following proposition, demonstrating that the non-conformity scores exhibit exchangeability with respect to link prediction.

\begin{proposition}\label{pp1}
    In the described inductive setting for link prediction, where the model has access to all node information and features but only a subset of positive links from training and validation sets during training, the unordered set of the scores $[V_i]_{i=1}^{K+L}$ is fixed, where $|\mathcal{D}_{calib}|=K$,  $|\mathcal{D}_{test}|=L$, and $V_i$ denotes the non-conformity score of link $e_i\in\mathcal{D}_{calib}\cup\mathcal{D}_{test}$. 
That is, the non-conformity scores are exchangeable for all $e\in\mathcal{D}_{calib}\cup\mathcal{D}_{test}$.
\end{proposition}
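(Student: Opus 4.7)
The plan is to reduce the claim to a structural observation about what information the trained GNN actually depends on, and then invoke the standard fact that a permutation-invariant non-conformity score applied to identically distributed data yields exchangeable scores. First I would unpack the pipeline: the GNN is trained on the subgraph $\mathcal{G}' = (\mathcal{V}, \mathcal{E}')$ with $\mathcal{E}' = \mathcal{E}_p \cap (\mathcal{D}_{train} \cup \mathcal{D}_{val})$, so both the learned parameters and the resulting edge-scoring function are functions of $\mathcal{G}'$ alone. Crucially, the labels of links in $\mathcal{D}_{calib} \cup \mathcal{D}_{test}$ are never revealed to the model, and the indicator of whether a held-out link ends up in $\mathcal{D}_{calib}$ versus $\mathcal{D}_{test}$ does not enter the training pipeline in any way.

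Next I would verify the permutation invariance condition stated immediately before the proposition. Fix any permutation $\pi$ of the indices of $\mathcal{D}_{calib} \cup \mathcal{D}_{test}$. Because the non-conformity score $V(e,y;\cdot)$ depends on the held-out links only as an unordered collection with no labels attached (they affect neither training nor the message-passing graph $\mathcal{G}'$), relabeling these indices by $\pi$ changes neither the trained model nor the value of $V$ at $(e,y)$. This establishes that the multiset $\{V_i\}_{i=1}^{K+L}$ of non-conformity scores computed over $\mathcal{D}_{calib}\cup\mathcal{D}_{test}$ is identical across all permutations, which is exactly the statement that the unordered collection is fixed.

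Finally I would pass from this deterministic invariance to exchangeability in the distributional sense required by CP. Since the partition of the held-out links into $\mathcal{D}_{calib}$ and $\mathcal{D}_{test}$ is drawn uniformly at random and independently of the training procedure, and since the score assigned to a link is the same regardless of which side of the split it lands on, the joint law of $(V_1,\ldots,V_{K+L})$ is invariant under permutations of its indices. This is precisely exchangeability, and it is all that is needed to invoke the standard split-CP coverage guarantee at test time.

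The main obstacle I anticipate is making rigorous the informal claim that \emph{no} information about the calibration/test split can sneak into the score computation. A careless definition of $V$ that, for instance, normalizes scores by a statistic computed over $\mathcal{D}_{calib}$ alone, or that retrains quantile heads on calibration folds, would immediately break permutation invariance. The argument therefore hinges on confirming that the downstream CQR-based score $V_i = \max\{\widehat{\mu}_{\alpha/2}(e_i) - y_i,\, y_i - \widehat{\mu}_{1-\alpha/2}(e_i)\}$ is a pointwise function of $(e_i,y_i)$ together with objects depending only on $\mathcal{D}_{train}\cup\mathcal{D}_{val}$, so that the random split acts purely as an external randomization on an already-fixed score function.
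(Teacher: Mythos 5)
Your proposal is correct and follows essentially the same route as the paper: both arguments rest on the observation that the node-embedding, edge-embedding, and scoring (and quantile) functions are fixed after training on $\mathcal{D}_{train}\cup\mathcal{D}_{val}$ alone, so permuting the links in $\mathcal{D}_{calib}\cup\mathcal{D}_{test}$ leaves each score, and hence the unordered multiset of scores, unchanged. Your final step, passing explicitly from this deterministic invariance to distributional exchangeability via the uniformly random calibration/test split, and your caution that no calibration-dependent normalization may enter $V$, are slightly more careful than the paper's proof, which simply asserts exchangeability once the multiset is shown to be fixed.
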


\begin{proof}
Let $f(\cdot)$ be the GNN model trained on the subgraph $\mathcal{G}'$ which produces the node embeddings $H$. Let $g(\cdot)$ denote the function that produces edge embeddings, i.e., $z_{e_i}=g(H_{u_i},H_{v_i})$ for edge $e_i=(u_i,v_i)$.
$h(\cdot)$ is the function that produces the prediction scores based on the edge embeddings, i.e., $s_i=h(z_{e_i})$. Let $v_i=V(s_i, y_i)$ be the non-conformity score for $e_i\in\mathcal{D}_{calib}\cup\mathcal{D}_{test}$.
$f(\cdot)$, $g(\cdot)$, and $h(\cdot)$ are fixed after training.
Thus it is clear that permutating the order of $e\in\mathcal{D}_{calib}\cup\mathcal{D}_{test}$ will not change the resulting non-conformity scores for $e_i\in\mathcal{D}_{calib}\cup\mathcal{D}_{test}$.
The sets of non-conformity scores before and after permutation are exactly the same:
\begin{equation*}
    \{v_1, \cdots, v_{K+L}\} = \{v_{\pi(1)}\cdots v_{\pi(K+L)}\}.
\end{equation*}
\end{proof}

\subsection{CQR for Conformalized Link Prediction}
With the fundamental exchangeability assumption satisfied, we now introduce how CP can be better leveraged to quantify the uncertainty for the link prediction task. Link prediction is framed as a task where a model is trained to produce prediction scores for all missing edges \cite{hu2020open}. The expectation is that the model will rank the prediction scores for positive test edges higher than those for negative edges. In the context of uncertainty quantification for link prediction, it is more appropriate to formulate it as a regression problem and construct a prediction interval instead of viewing it as a binary classification process and creating a prediction set for each unobserved edge.
We therefore propose to leverage Conformalized Quantile Regression (CQR)~\cite{romano2019conformalized} which provides prediction intervals that come with a provable guarantee of coverage probability. For link prediction, we adapt the non-conformity score in CQR to the following:
\begin{equation}
    V(e,y)=\max\{\widehat{\mu}_{\alpha/2}(z_e)-y, y-\widehat{\mu}_{1-\alpha/2}(z_e)\},
\end{equation}
where $\widehat{\mu}_{\gamma}(\cdot)$ denotes the $\gamma$-th conditional quantile function of the edge embeddings $z_e$.
The prediction interval is then constructed as 
\begin{equation}
    C(e)=[\widehat{\mu}_{\alpha/2}(z_e)-\widehat{q}, \widehat{q}-\widehat{\mu}_{1-\alpha/2}(z_e)].
\end{equation}
Based on Proposition~\ref{pp1}, we prove that the validity of coverage $C(e)$ is guaranteed.

\begin{theorem}\label{theorem1}
    Given that $\{v_i\}_{i=1}^{K+L}$ is exchangeable, with error rate $\alpha\in(0,1)$ and $\widehat{q}=\text{Quantile}(v_1,\cdots,v_K;\lceil (K+1)(1-\alpha)/K \rceil)$, the constructed prediction interval for edge $e_{K+j}$ is $C(e_{K+j})=[\widehat{\mu}_{\alpha/2}(z_{K+j})-\widehat{q}, \widehat{q}-\widehat{\mu}_{1-\alpha/2}(z_{K+j})]$, $j=\{1,\cdots,L\}$, satisfying
    \begin{equation*}
        \mathbb{P}\{y_{K+j}\in C(e_{K+j})\}\ge 1-\alpha.
    \end{equation*}
    \label{the:coverage}
\end{theorem}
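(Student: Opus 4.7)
The plan is to reduce the coverage event to a comparison between the test non-conformity score and the calibration quantile, and then invoke the standard empirical quantile lemma for exchangeable variables. Concretely, I would first unfold the definition of $C(e_{K+j})$ and the CQR non-conformity score to show the algebraic equivalence
\[
y_{K+j}\in C(e_{K+j})\;\Longleftrightarrow\; V(e_{K+j},y_{K+j})\le \widehat{q},
\]
where $V$ is the CQR score $\max\{\widehat{\mu}_{\alpha/2}(z_e)-y,\; y-\widehat{\mu}_{1-\alpha/2}(z_e)\}$. This step is pure algebra: the two cases of the $\max$ correspond to the lower and upper endpoints of the calibrated interval, and each inequality directly matches one side of $y \in [\widehat{\mu}_{\alpha/2}(z_e)-\widehat{q},\; \widehat{\mu}_{1-\alpha/2}(z_e)+\widehat{q}]$. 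Once this reformulation is in place, the coverage question reduces entirely to controlling $\mathbb{P}\{V_{K+j}\le\widehat{q}\}$.

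Next I would apply the classical empirical quantile lemma: if $V_1,\ldots,V_{K+1}$ are exchangeable random variables, then for any $\alpha\in(0,1)$,
\[
\mathbb{P}\bigl\{V_{K+1}\le \mathrm{Quantile}(V_1,\ldots,V_K;\lceil(K+1)(1-\alpha)\rceil/K)\bigr\}\ge 1-\alpha.
\]
Proposition~\ref{pp1} has already established that $\{V_i\}_{i=1}^{K+L}$ is exchangeable in our inductive link prediction setup, so the sub-collection $\{V_1,\ldots,V_K,V_{K+j}\}$ is exchangeable for each fixed $j\in\{1,\ldots,L\}$. Applying the lemma to this sub-collection yields exactly the desired bound $\mathbb{P}\{V_{K+j}\le\widehat{q}\}\ge 1-\alpha$, and combining this with the equivalence from the first step gives the theorem.

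The only step requiring care, rather than true difficulty, is justifying that exchangeability of the full collection $\{V_i\}_{i=1}^{K+L}$ transfers to the pair-of-subsets needed by the quantile lemma; this follows because any permutation of $\{1,\ldots,K,K+j\}$ can be extended to a permutation of $\{1,\ldots,K+L\}$ that fixes the remaining indices, so the joint distribution of $(V_1,\ldots,V_K,V_{K+j})$ is invariant under permutations. I expect no genuine obstacle beyond this bookkeeping: the heavy lifting has already been done in establishing the permutation invariance of the non-conformity score function in Section~\ref{sec:exchang}, and the rest is a direct instantiation of the standard split-CQR coverage argument of~\cite{romano2019conformalized} with our link-specific score and interval.
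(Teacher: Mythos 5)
Your proposal is correct and follows essentially the same route as the paper's proof: reduce the coverage event $\{y_{K+j}\in C(e_{K+j})\}$ to $\{v_{K+j}\le\widehat{q}\}$ and then bound this probability using exchangeability of the calibration scores together with the test score. The only cosmetic difference is that the paper proves the quantile step inline (arguing that $v_{K+1}$ is equally likely to fall at any rank among the sorted calibration scores, so $\mathbb{P}\{v_{K+1}\le v_{\lceil (K+1)(1-\alpha)\rceil}\}\ge 1-\alpha$) for a single test point rather than citing the standard empirical quantile lemma, and your extra bookkeeping about extending permutations to sub-collections is a sound way to make explicit what the paper leaves implicit.
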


\begin{proof}
    Let $v_{K+1}$ be the non-conformity score for the test link $(e_{K+1},y_{K+1})$. We have
\begin{equation*}
        \mathbb{P}\{y_{K+1}\in C(e_{K+1})\} = \mathbb{P}\{v_{K+1}\leq \widehat{q}\}.
\end{equation*}
Without loss of generality, we assume that $\{v_i\}_{i=1}^{K}$ is sorted in ascending order, i.e., $v_1\le v_2\le\cdots\le v_K$.
Since $\{v_i\}_{i=1}^{K+1}$ are exchangeable, we have
\begin{equation*}
    \mathbb{P}\{v_{K+1}\le v_{t}\} = \frac{t}{K+1}\quad (1\le t\le K)
\end{equation*}
that is, $v_{K+1}$ is equally likely to fall in anywhere between $v_1,\cdots, v_K$.
Thus the following inequality holds:

\begin{equation*}
\begin{aligned}
    \mathbb{P}\{v_{K+1}\leq \widehat{q}\} &= \mathbb{P}\{v_{K+1}\leq v_{\lceil (K+1)(1-\alpha) \rceil}\} \\
    &= \frac{\lceil (K+1)(1-\alpha) \rceil}{K+1}\\
    &\ge 1-\alpha .
\end{aligned} 
\end{equation*}
\end{proof}
Theorem \ref{the:coverage} suggests that the coverage of the prediction interval in CLP is at least $1-\alpha$ with a rigorous statistical guarantee.

\subsection{Efficiency and Structural Property}
In addition to the coverage rate, another important evaluation metric for CP is efficiency, i.e., the size or the length of the prediction sets or intervals. A smaller size or length suggests a more informative prediction set or interval. To assess the efficiency of CLP, a simple approach is to measure the average length of the prediction interval at a given error rate $\alpha$. A shorter interval length suggests an improved efficiency.

Traditional approaches \cite{angelopoulos2021uncertainty,xu2021conformal} for improving CP efficiency cannot be directly applied as they are for non-graph data (e.g., tabular data or images), leaving the unique characteristics (e.g., structural properties) of graph data largely unexplored. To improve the efficiency of CLP, we explore its potential connection to the graph structural information.  Identifying crucial graph data properties that affect graph learning is an ongoing challenge \cite{yasir2023examining}. Here we specifically focus on node degree distribution, which reflects node connectivity and provides valuable insights into network structure and behavior. It is widely recognized as a significant factor impacting graph model performance \cite{yasir2023examining,li2023metadata}. Other structural properties such as clustering coefficient and connectivity could be valuable to explore in future research.
Next, we reveal a novel and interesting connection between the efficiency of the CLP and the node degree distribution of the underlying graph structure.

We commence our study with a series of experiments on semi-synthetic graphs that exhibit varying degrees of conformity to the power law, as outlined in~\cite{clauset2009power}. These graphs are generated using the method introduced in~\cite{zhao2023unveiling}. Specifically, given a real-world graph, we create $n$ cliques within a given graph by randomly selecting $m$ nodes and connecting all nodes within each clique. By adjusting the values of $m$ and $n$, we can generate synthetic graphs with different levels of conformity to the power law. The Kolmogorov-Smirnov (KS) statistic~\cite{broido2019scale} is employed as a metric to quantify the extent of conformity, where a lower value indicates a higher degree of conformity to the power law. Subsequently, we carry out simulation experiments based on the Amazon Computers dataset~\cite{shchur2018pitfalls}.
% Our primary emphasis lies in examining the degree distribution, a pivotal feature within the graph structure.
% We begin with experiments on synthetic graphs exhibiting varying levels of conformity to the power law~\cite{clauset2009power}.
% These graphs are generated using the approach introduced in~\cite{ding2019interactive}. 
% Specifically, we create $n$ cliques within a given graph by randomly selecting $m$ nodes and then make those nodes fully connected for each clique. 
% By adjusting the values of $m$ and $n$, we can obtain synthetic graphs with different levels of conformity to the power law. 
% The Kolmogorov-Smirnov (KS) statistic~\cite{broido2019scale} serves as the metric to quantify these varying levels of conformity, wherein a lower value signifies higher conformity to the power law.
% We then conduct the simulation experiments based on Amazon Computers~\cite{shchur2018pitfalls} dataset.

\begin{figure}[]
  \centering
  \includegraphics[width=0.7\linewidth]{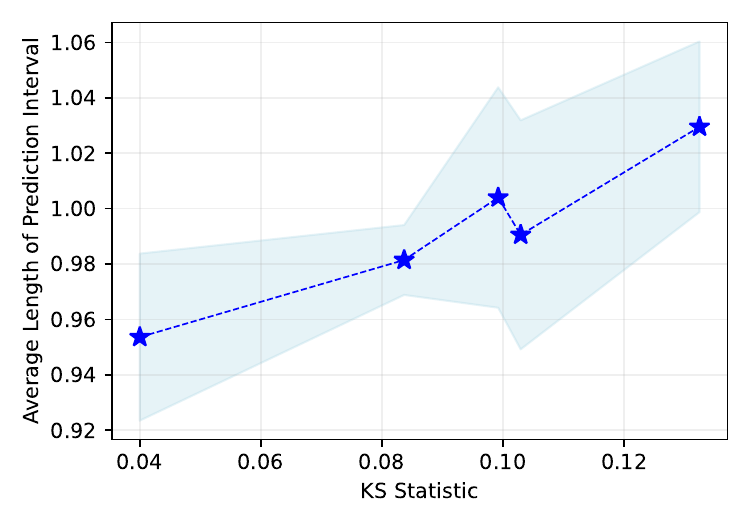}
  %\vspace{-20pt}
  \caption{Simulation study on a semi-synthetic dataset generated from the Amazon Computers dataset~\cite{shchur2018pitfalls}.}
  %\Description{ablation german.}
  \label{fig:simulation}
\end{figure}

Specifically, we select ($m$, $n$) from $\{(25, 20), (50, 20), (75, 20), (100,$ $ 20), (150,20)\}$. For each combination of $(m,n)$, we create five synthetic graphs and apply the CLP procedure described in Sec. 3.2. We then record the average length of the prediction interval as a measure of CP efficiency. To represent the performance of a particular $(m, n)$, we calculate the mean KS statistic value by averaging the KS statistic values of the five synthetic graphs with the same $(m, n)$. 
The simulation results are presented in \textbf{Figure}~\ref{fig:simulation}, in which the horizontal axis value is the averaged KS statistic. The trend is evident: graphs exhibiting higher KS statistic values typically display larger average prediction intervals, suggesting lower efficiency in CLP – that is, less informative prediction intervals. This notable finding inspires us to explore the potential enhancement of CP efficiency when conducting CLP by utilizing edges with a degree sequence that closely aligns with the power law distribution.

%\lu{Add some theoretical analyses if possible.}

\begin{algorithm}
    \caption{Conformalized Link Prediction.}
    \label{algo}
    \LinesNumbered
    \KwIn{\\ \quad Graph $\mathcal{G}=(\mathcal{V},\mathcal{E}_p)$ and links set $\mathcal{E}$.
        \\ \quad Miscoverage level $\alpha\in(0,1).$
        \\ \quad Node embedding algorithm $\mathcal{F}$, edge embedding algorithm $\mathcal{Z}$, edge scoring algorithm $\mathcal{H}$.
        \\ \quad Quantile regression algorithm $\mathcal{A}$.}
    \KwOut{\\ \quad Prediction interval $C(e)$ for each $e\in\mathcal{D}_{test}$.}
    Split links set into disjoint sets $\mathcal{D}_{train}, \mathcal{D}_{val}, \mathcal{D}_{calib}$, $\mathcal{D}_{test}$\;
    Construct subgraph $\mathcal{G}'=(\mathcal{V},\mathcal{E}')$, where $\mathcal{E}'=\mathcal{E}_p\cap(\mathcal{D}_{train}\cup\mathcal{D}_{val})$\;
    \tcp{Train the base model}
    \While{training}{
    Fit node embedding function: $f(\cdot) \leftarrow \mathcal{F(G')}$\;
    Fit edge embedding function: $z(\cdot) \leftarrow \mathcal{Z}(\{(f(u),f(v))|e=(u,v)\in\mathcal{D}_{train}\cup\mathcal{D}_{val}\})$\;
    Fit edge scoring function: $h(\cdot) \leftarrow \mathcal{H}(\{(z(e),y_e)|e\in\mathcal{D}_{train}\cup\mathcal{D}_{val}\})$\;}
    \tcp{Sampling}
    $\mathcal{D}'_{train}, \mathcal{D}'_{val}, \mathcal{D}'_{calib}\leftarrow Sampling(\mathcal{D}_{train}, \mathcal{D}_{val}, \mathcal{D}_{calib})$\;
    \tcp{CLP}
    Fit conditional quantile functions: $\{\widehat{\mu}_{\alpha/2}(\cdot), \widehat{\mu}_{1-\alpha/2}(\cdot)\} \leftarrow \mathcal{A}(\{(z(e),y_e)|e\in\mathcal{D}'_{train}\cup\mathcal{D}'_{val}\})$\;
    Compute the non-conformity score $V(e,y_e)=\max\{\widehat{\mu}_{\alpha/2}(z_e)-y_e, y_e-\widehat{\mu}_{1-\alpha/2}(z_e)\}$ for each $e\in\mathcal{D}'_{calib}$\;
    Compute $\widehat{q}$, the $\lceil (|\mathcal{D}'_{calib}|+1)(1-\alpha) /|\mathcal{D}'_{calib}|)\rceil$-$th$ empirical quantile of $\{V(e,y_e)|e\in\mathcal{D}'_{calib}\}$\;
    Construct prediction interval $C(e)=[\widehat{\mu}_{\alpha/2}(z_e)-\widehat{q}, \widehat{q}-\widehat{\mu}_{1-\alpha/2}(z_e)]$ for each $e\in\mathcal{D}_{test}$.
\end{algorithm}

\subsection{Sampling-based CQR for Improved Efficiency}
\label{sec:S-CQR}
Based on the findings above, we propose a simple yet effective sampling-based method for enhanced CP efficiency to quantify the uncertainty in GNN-based link prediction.

Our core idea is to bring the degree distribution of the existing graph into closer alignment with a power-law distribution, a modification that we believe will enhance the efficiency of CLP, as supported by our empirical research. One approach to achieve this is by selectively sampling specific edges such that the resulting node degree distribution closely follows the power-law distribution. We then use these sampled edges to compute nonconformity scores. Therefore, the first step involves obtaining an ideal degree sequence that adheres to a specific power-law distribution, serving as a reference. Subsequently, the sampling procedure is carried out, taking cues from this ideal degree sequence. The sampling process is detailed below.

\subsubsection{Fitting the power-law distribution}
Suppose that the node degree $d$ follows a discrete power-law distribution starting at $d_{min}\ge 1$, then the probability density function (PDF) of the power-law is defined as 
\begin{equation}
    \text{Pr}(d)=\frac{1}{\zeta(\beta, d_{min})}d^{-\beta}, 
\end{equation}
where $\zeta(\beta, d_{min})=\sum_{i=0}^{\infty}(i+d_{min})^{-\beta}$
is the Hurwitz zeta function and $\beta$ denotes the scaling exponent for power law distribution.
To determine the best-fitting power-law distribution for a given degree sequence, our primary objective centers on estimating $\beta$, which is the only unknown parameter in the PDF of the power-law distribution. Estimating $\beta$ requires selecting $d_{min}$, determined by the standard Kolmogorov-Smirnov minimization approach. This method identifies $d_{min}$ as the value minimizing the maximum absolute difference between the empirical distribution $E(d)$ and the cumulative distribution function of the best-fitting power law $P(d|\widehat{\beta})$ for degrees $d\ge d_{min}$~\cite{broido2019scale}.
The estimated $\widehat{\beta}$ is then obtained by \cite{clauset2009power}
\begin{equation}
    \widehat{\beta} = 1+n\left[ \sum_{i=1}^n \log\frac{d_i}{d_{min}-\frac{1}{2}}\right]^{-1}.
\end{equation}
%\lu{why do we need to estimate $\beta$? Explain here.}
\subsubsection{Generating ideal degree sequence with $\widehat{\beta}$-parameterized power-law distribution}
%In this step, we employ a power-law generating function such as the Pareto distribution to generate a degree sequence parameterized by $\widehat{\beta}$ and the number of nodes $n$. 
In this step, we generate a degree sequence adhering to power law distribution. 
Various distribution functions follow the power law. Here we utilize the Pareto distribution \cite{arnold2014pareto} as the specific power-law function to generate the degree sequence, parameterized by $\widehat{\beta}$ and the number of nodes $n$. 
Specifically, we define the PDF of the Pareto distribution in link prediction as follows
\begin{equation}
    f(x;x_m,\widehat{\beta}) = \frac{\widehat{\beta}\cdot x_m^{\widehat{\beta}}}{x^{\widehat{\beta}+1}},\quad \text{for } x\ge x_m,
\end{equation}
where $x_m$ is a scale parameter (minimum value for which the distribution is defined) and $\widehat{\beta}$ serves as a shape parameter (indicating the distribution's tail heaviness and skewness). 
%\lu{Why this function? what does it mean? How to generate the sequence?}

\subsubsection{Sampling edges from the original graph for a degree distribution that follows the power law}
We begin by computing the empirical cumulative distributions for both the degree sequence of the original graph and the ideal degree sequence.
Following this, we establish sampling probabilities for edges, determined by the deviation $dvia(d)$ between these two distributions.
Specifically, for a given edge $e=(u,v)$, we denote the degree of nodes $u$ and $v$ as $d_u$ and $d_v$, respectively.
The deviation $dvia(d)$ is calculated as:
\begin{equation}
    dvia(d) = \left |\mathrm{eCDF}_D(d)-\mathrm{eCDF}_{D'}(d)\right |,
\end{equation}
where $\mathrm{eCDF}_D(\cdot)$ and $\mathrm{eCDF}_{D'}(\cdot)$ represents the empirical CDFs of the original degree sequence $D$ and the ideal degree sequence $D'$, respectively.
Subsequently, the sampling probability of edge $e$ can be determined by:
\begin{equation}
    \mathbb{P}(e)=\min\{\lambda\cdot S(dvia(d_u),dvia(d_v)), 1\},
\end{equation}
where $\lambda>0$ is a hyperparameter and $S(\cdot)$ denotes the function of aggregating the two deviation scores, e.g., the operation of summation. 
In this process, we prioritize edges with greater deviations by assigning them a higher probability considering the deviation direction. To put this into action, we generate a random floating-point number, denoted as $r_e$, within the range of $[0,1)$ for each edge $e$. If $r_e\le\mathbb{P}(e)$, we retain this edge. Otherwise, we remove it from the original set of edges.
% \jian{what's the intuition of the equation above? need to explain here. e.g., it assigns higher probability to nodes with higher deviations? (2) after obtaining the sampling probabilities, what do you do? (sampling from existing edges?) how do you sample? (proportional to the probability?) these need to be explained clearly to readers.} 

The overall algorithm for CLP is presented in Algorithm~\ref{algo}. 
We first train a base GNN model for standard link prediction from line 3 to line 7.
Then, in line 8, the proposed sampling procedure is implemented.
Finally, we apply the conformalized link prediction method on the sampled edge set from line 9 to line 12 to obtain prediction intervals.
%\jian{need to describe the algorithm. e.g., in step xx, we do xxx; in steps xx -- xx, we do xx.}

%\begin{table*}[]
%\centering
%\caption{Statistics of the datasets.}
%\label{tab:dataset}
% \resizebox{\columnwidth}{!}{%
%\begin{tabular}{cccccc}
%\hline
%\textbf{Dataset Name} & ogbl-ddi  & ogbl-ppa   & ogbl-citation2 & german credit & rochester38 \\ \hline
%\textbf{\#Nodes}      & 4,267     & 576,289    & 2,927,963      & 1,000         & 4,563       \\
%\textbf{\#Edges}      & 1,334,889 & 30,326,273 & 30,561,187     & 22,242        & 167,653     \\
%\textbf{KS Statistic} & 0.3275    & 0.0908     & 0.0302         & 0.1133        & 0.1446      \\ \hline
%\end{tabular}%
% }
%\end{table*}

\begin{table}[]
\centering
\caption{Basic statistics of the datasets.}
\label{tab:dataset}
\begin{tabular}{c|ccc}
\hline
\textbf{Dataset Name} & \textbf{\#Nodes} & \textbf{\#Edges} & \textbf{KS Statistic} \\ \hline
ogbl-ddi              & 4,267            & 1,334,889        & 0.3275                \\
ogbl-ppa              & 576,289          & 30,326,273       & 0.0908                \\
ogbl-citation2        & 2,927,963        & 30,561,187       & 0.0302                \\
german credit         & 1,000            & 22,242           & 0.1133                \\
rochester38           & 4,563            & 167,653          & 0.1446                \\ \hline
\end{tabular}
\end{table}

\begin{table*}[]
\centering
\caption{Empirical coverage and average length of predictions intervals with target coverage 90\% (GCN backbone).}
\label{tab:result1}
\begin{tabular}{c|cc|c|c|c|c}
\hline
\textbf{dataset} &
  \multicolumn{2}{c|}{\textbf{KS Statistic}} &
  \textbf{method} &
  \textbf{emp. coverage (\%)} &
  \textbf{avg. prediction length} &
  \textbf{\begin{tabular}[c]{@{}c@{}}improved\\ efficiency\end{tabular}} \\ \hline
 &
  \cellcolor[HTML]{EFEFEF}before &
  \cellcolor[HTML]{EFEFEF}0.32 &
  \cellcolor[HTML]{EFEFEF}CQR for CLP &
  \cellcolor[HTML]{EFEFEF}91.79 $\pm$ 0.02 &
  \cellcolor[HTML]{EFEFEF}0.7656 $\pm$ 0.0135 &
   \\
\multirow{-2}{*}{ogbl-ddi} &
  after &
  0.24 &
  S-CQR for CLP &
  91.45 $\pm$ 0.05 &
  0.6321 $\pm$ 0.0252 &
  \multirow{-2}{*}{$\uparrow$ 17.43\%} \\ \hline
 &
  \cellcolor[HTML]{EFEFEF}before &
  \cellcolor[HTML]{EFEFEF}0.08 &
  \cellcolor[HTML]{EFEFEF}CQR for CLP &
  \cellcolor[HTML]{EFEFEF}90.31 $\pm$ 0.06 &
  \cellcolor[HTML]{EFEFEF}0.1720 $\pm$ 0.0400 &
   \\
\multirow{-2}{*}{ogbl-ppa} &
  after &
  0.04 &
  S-CQR for CLP &
  90.08 $\pm$ 0.01 &
  0.1659 $\pm$ 0.0010 &
  \multirow{-2}{*}{$\uparrow$ 3.54\%} \\ \hline
 &
  \cellcolor[HTML]{EFEFEF}before &
  \cellcolor[HTML]{EFEFEF}0.03 &
  \cellcolor[HTML]{EFEFEF}CQR for CLP &
  \cellcolor[HTML]{EFEFEF}90.09 $\pm$ 0.06 &
  \cellcolor[HTML]{EFEFEF}0.1428 $\pm$ 0.0013 &
   \\
\multirow{-2}{*}{ogbl-citation2} &
  after &
  0.02 &
  S-CQR for CLP &
  90.01 $\pm$ 0.12 &
  0.1264 $\pm$ 0.0322 &
  \multirow{-2}{*}{$\uparrow$ 11.48\%} \\ \hline
 &
  \cellcolor[HTML]{EFEFEF}before &
  \cellcolor[HTML]{EFEFEF}0.11 &
  \cellcolor[HTML]{EFEFEF}CQR for CLP &
  \cellcolor[HTML]{EFEFEF}91.43 $\pm$ 0.23 &
  \cellcolor[HTML]{EFEFEF}0.9552 $\pm$ 0.0200 &
   \\
\multirow{-2}{*}{german credit} &
  after &
  0.03 &
  S-CQR for CLP &
  91.49 $\pm$ 0.29 &
  0.7080 $\pm$ 0.0119 &
  \multirow{-2}{*}{$\uparrow$ 25.87\%} \\ \hline
 &
  \cellcolor[HTML]{EFEFEF}before &
  \cellcolor[HTML]{EFEFEF}0.14 &
  \cellcolor[HTML]{EFEFEF}CQR for CLP &
  \cellcolor[HTML]{EFEFEF}90.00 $\pm$ 0.02 &
  \cellcolor[HTML]{EFEFEF}0.8078 $\pm$ 0.0160 &
   \\
\multirow{-2}{*}{rochester38} &
  after &
  0.11 &
  S-CQR for CLP &
  90.14 $\pm$ 0.13 &
  0.4844 $\pm$ 0.0165 &
  \multirow{-2}{*}{$\uparrow$ 40.03\%} \\ \hline
\end{tabular}
\end{table*}

\section{Experiments}
In this section, we conduct experiments on real-world graph datasets across various domains (e.g., biology, citation network, and social network) to evaluate the performance of our proposed method. In particular, we aim to answer the following research questions:
\begin{itemize}
    \item Does the proposed CLP procedure attain the desired coverage in practical implementations?
    \item Does the proposed S-CQR for CLP effectively enhance CP efficiency?
    \item How does the proposed S-CQR perform across different GNN-based link prediction models?
    \item How does the involved hyperparameter impact the performance of CLP procedure?
\end{itemize}
\subsection{Experimental Setup}
\subsubsection{Datasets}
We evaluate the proposed CLP procedure on five benchmark datasets for link prediction, including the drug-drug interaction network \underline{ogbl-ddi}, protein interaction network \underline{ogbl-ppa}, citation network \underline{ogbl-citation2}~\cite{hu2020open}, and two social networks \underline{Ger-} \underline{man Credit}~\cite{agarwal2021towards} 
and \underline{Rochester38} \cite{traud2012social}.
This selection spans various graph scales, from smaller ones to large-scale graph datasets with millions of nodes, showcasing the wide-ranging applicability of our methods in real-world web contexts. 
The basic statistics of these datasets are shown in Table~\ref{tab:dataset}. We can see that the node degree distribution in ogbl-ddi dataset adheres least to the power-law distribution, followed by the Rochester38 dataset.

\subsubsection{Backbone Models}
We employ a three-layer Graph Convolutional Network (GCN)~\cite{kipf2017semi} and GraphSAGE~\cite{hamilton2017inductive} as the base models for link prediction. Note that any GNN models can be integrated into our proposed CP pipeline. CQR is implemented using neural networks for quantile regression, and the neural network architecture consists of three fully connected layers, with ReLU nonlinearities mapping between layers.

\subsubsection{Evaluation Setup}
For ogbl-ddi, ogbl-ppa, and ogbl-citation2 datasets, we use the splits given in the original papers~\cite{hu2020open}.
For German Credit and Rochester38, we split the links into sets as follows: 50\% for training ($\mathcal{D}_{train}$), 10\% for validation ($\mathcal{D}_{val}$), 20\% for calibration ($\mathcal{D}_{calib}$), and 20\% for testing ($\mathcal{D}_{test}$). 
We conduct five different random splits of calibration and test sets, and perform 10 repetitions of the experiment for each split. Averaged results are reported below.
We then measure empirical coverage and the average length of prediction interval to evaluate the validity and efficiency of both CQR for CLP and S-CQR (sampling-based) for CLP.
A detailed experimental setting is provided in Appendix~\ref{apdx:exp}.

It should be noted that CLP is a relatively recent research field. To the best of our knowledge, there exists only one prior study~\cite{marandon2023conformal} related to this issue. However, this work formalizes the problem in a different way from ours and thus cannot be directly used for comparison. Specifically, our work focuses on constructing prediction intervals that bound the miscoverage, while \cite{marandon2023conformal} focuses on bounding the false discovery rate.

\begin{table*}[]
\centering
\caption{Empirical coverage and average length of predictions intervals with target coverage 90\% (GraphSAGE backbone).}
\label{tab:result2}
\begin{tabular}{c|cc|c|c|c|c}
\hline
\textbf{dataset} &
  \multicolumn{2}{c|}{\textbf{KS Statistic}} &
  \textbf{methods} &
  \textbf{emp. coverage (\%)} &
  \textbf{avg. prediction length} &
  \textbf{\begin{tabular}[c]{@{}c@{}}improved\\ efficiency\end{tabular}} \\ \hline
 &
  \cellcolor[HTML]{EFEFEF}before &
  \cellcolor[HTML]{EFEFEF}0.32 &
  \cellcolor[HTML]{EFEFEF}CQR for CLP &
  \cellcolor[HTML]{EFEFEF}91.77 $\pm$ 0.03 &
  \cellcolor[HTML]{EFEFEF}0.7343 $\pm$ 0.0072 &
   \\
\multirow{-2}{*}{ogbl-ddi} &
  after &
  0.24 &
  S-CQR for CLP &
  91.83 $\pm$ 0.02 &
  0.6619 $\pm$ 0.0167 &
  \multirow{-2}{*}{$\uparrow$ 9.85\%} \\ \hline
 &
  \cellcolor[HTML]{EFEFEF}before &
  \cellcolor[HTML]{EFEFEF}0.08 &
  \cellcolor[HTML]{EFEFEF}CQR for CLP &
  \cellcolor[HTML]{EFEFEF}90.44 $\pm$ 0.03 &
  \cellcolor[HTML]{EFEFEF}0.1698 $\pm$ 0.0013 &
   \\
\multirow{-2}{*}{ogbl-ppa} &
  after &
  0.04 &
  S-CQR for CLP &
  90.10 $\pm$ 0.04 &
  0.1665 $\pm$ 0.0018 &
  \multirow{-2}{*}{$\uparrow$ 1.94\%} \\ \hline
 &
  \cellcolor[HTML]{EFEFEF}before &
  \cellcolor[HTML]{EFEFEF}0.03 &
  \cellcolor[HTML]{EFEFEF}CQR for CLP &
  \cellcolor[HTML]{EFEFEF}90.13 $\pm$ 0.11 &
  \cellcolor[HTML]{EFEFEF}0.1399 $\pm$ 0.0017 &
   \\
\multirow{-2}{*}{ogbl-citation2} &
  after &
  0.02 &
  S-CQR for CLP &
  90.02 $\pm$ 0.07 &
  0.1293 $\pm$ 0.0083 &
  \multirow{-2}{*}{$\uparrow$ 7.57\%} \\ \hline
 &
  \cellcolor[HTML]{EFEFEF}before &
  \cellcolor[HTML]{EFEFEF}0.11 &
  \cellcolor[HTML]{EFEFEF}CQR for CLP &
  \cellcolor[HTML]{EFEFEF}90.74 $\pm$ 0.28 &
  \cellcolor[HTML]{EFEFEF}0.9054 $\pm$ 0.0093 &
   \\
\multirow{-2}{*}{german credit} &
  after &
  0.03 &
  S-CQR for CLP &
  91.19 $\pm$ 0.15 &
  0.7402 $\pm$ 0.0125 &
  \multirow{-2}{*}{$\uparrow$ 18.25\%} \\ \hline
 &
  \cellcolor[HTML]{EFEFEF}before &
  \cellcolor[HTML]{EFEFEF}0.14 &
  \cellcolor[HTML]{EFEFEF}CQR for CLP &
  \cellcolor[HTML]{EFEFEF}90.03 $\pm$ 0.05 &
  \cellcolor[HTML]{EFEFEF}0.7065 $\pm$ 0.0142 &
   \\
\multirow{-2}{*}{rochester38} &
  after &
  0.11 &
  S-CQR for CLP &
  90.01 $\pm$ 0.01 &
  0.5099 $\pm$ 0.0236 &
  \multirow{-2}{*}{$\uparrow$ 27.82\%} \\ \hline
\end{tabular}
\end{table*}

\subsection{Main Results}

In Table~\ref{tab:result1}, we present the empirical coverage and average length of prediction intervals across five datasets with GCN as the backbone. 
% We further change the backbone model for link prediction to GraphSAGE to assess the performance of the proposed methods, as shown in Table~\ref{tab:result2}.
We can have the following observations according to the experimental results.
\subsubsection{The proposed Conformalized Link Prediction procedures achieve the target coverage.}
As shown in Table~\ref{tab:result1} and Table~\ref{tab:result2}, with the predefined error rate $\alpha=0.1$, both CQR and S-CQR for Conformalized Link Prediction achieve the desired coverage (90\%) on all five datasets. This empirically validates the theory established in Section~\ref{sec:exchang}. That is, in an inductive scenario, utilizing GNNs for link prediction on graphs adheres to a particular permutation invariance requirement and fulfills the exchangeability condition needed for CP. This enables the design of more advanced uncertainty quantification methods for link prediction integrated with CP techniques.
\subsubsection{The proposed sampling-based strategy effectively improves CP efficiency.}
Our proposed sampling strategy is very effective at improving the efficiency of the CLP process and assists in the generation of tighter prediction intervals while maintaining a desirable coverage rate. This result suggests that our proposed CLP approach can generate more informative prediction intervals in link prediction, which can be important in critical decision-making contexts. Particularly, we have the following observations: 
\textbf{Firstly}, we measure the KS statistic values for the graphs before and after the sampling operation outlined in Section~\ref{sec:S-CQR}. A lower KS statistic value suggests that the degree distribution of a graph aligns better with the characteristics of a power law distribution. The results are displayed in the `KS Statistic' column in Table~\ref{tab:result1}.
As we can see, the proposed sampling procedure can generate graphs with a node degree distribution that is more in line with the power law.
\textbf{Secondly}, the proposed S-CQR for CLP effectively reduces the average length of prediction intervals and increases the efficiency. This validates our hypothesis in Sec. 3.3 that with the same backbone GNN models, graphs that closely follow power law distribution typically lead to higher CP efficiency.
\textbf{Thirdly}, based on the results, it is evident that the proposed approach tends to perform more effectively on graphs whose degree distributions do not follow power law very well. For example, when assessing its performance on the ogbl-ppa and ogbl-citation2 datasets, which prominently follow the power law distribution (i.e., smallest KS statistics among the five datasets), the improvements in CP efficiency over the CQR are relatively modest. That is, the enhancements on these two datasets are the most modest among the five datasets evaluated, amounting to 3.54\% and 11.48\%, respectively. For the remaining datasets that do not closely adhere to the power law distribution, the observed enhancements in efficiency appear to be much more substantial.

%\subsubsection{Impact of backbone model selection on CLP performance.}
\subsubsection{CLP performance with different backbone link prediction models.}
%The performance of CP highly relies on the backbone machine learning models \cite{angelopoulos2021gentle,vovk2005algorithmic}: Models with more accurate predictions typically produce prediction sets/intervals smaller/shorter than less accurate models. 
%To examine the impact of backbone GNN models on the performance of the proposed CLP framework, we replace the GCN model with GraphSAGE and repeat the above experiments. Results in Table \ref{tab:result2} show that different GNN models can indeed have a great impact on the CLP performance.
%For instance, on ogbl-ddi dataset, though using both base models achieves target coverage, employing GCN as the base model leads to higher CP efficiency compared to GraphSAGE (with average prediction lengths of 0.7656 versus 0.7343), as detailed in Table~\ref{tab:result1} and Table~\ref{tab:result2}. This further impacts the performance of the proposed S-CQR algorithm, where employing GCN as the backbone resulted in an average prediction length of 0.6321, while using GraphSAGE as the backbone yielded an average prediction length of 0.6619.
To examine the impact of backbone GNN models on the performance of the S-CQR for CLP process, we replace the GCN model with GraphSAGE and repeat the above experiments. The results are shown in Table \ref{tab:result2}.
Our observations are as follows: The proposed approach consistently attains the target coverage rate and improves efficiency, i.e., reducing the length of prediction interval, across different backbone GNN models. For instance, on ogbl-ddi dataset, S-CQR decreases the prediction interval length from 0.7656 to 0.6321 with GCN and from 0.7343 to 0.6619 with GraphSAGE as the backbone models, meanwhile achieving the targeted 90\% coverage rate. 
The observed efficiency gains across different backbone models indicate that our method consistently boosts CP performance, demonstrating its backbone-model-agnostic effectiveness and robustness. This consistency aligns with expectations, as the link prediction process acts as a black box to the subsequent CLP procedure, suggesting the method's adaptability.

\begin{table}[]
\centering
\caption{Statistics of sampled graphs under different values of $\lambda$ on Rochester38 dataset.}
\label{tab:sample}
\begin{tabular}{c|cc}
\hline
$\lambda$ & graph density & KS Statistic \\ \hline
0.45     & 0.0143 & 0.1423       \\
0.40     & 0.0127 & 0.1374       \\
0.35     & 0.0112 & 0.1246       \\
0.30     & 0.0097 & 0.1167       \\
0.25     & 0.0082 & 0.0915       \\
0.20     & 0.0066 & 0.0719       \\
0.15     & 0.0049 & 0.0648       \\ \hline
\end{tabular}
\vspace{-2mm}
\end{table}

\begin{figure}[]
  \centering
  \includegraphics[width=0.8\linewidth]{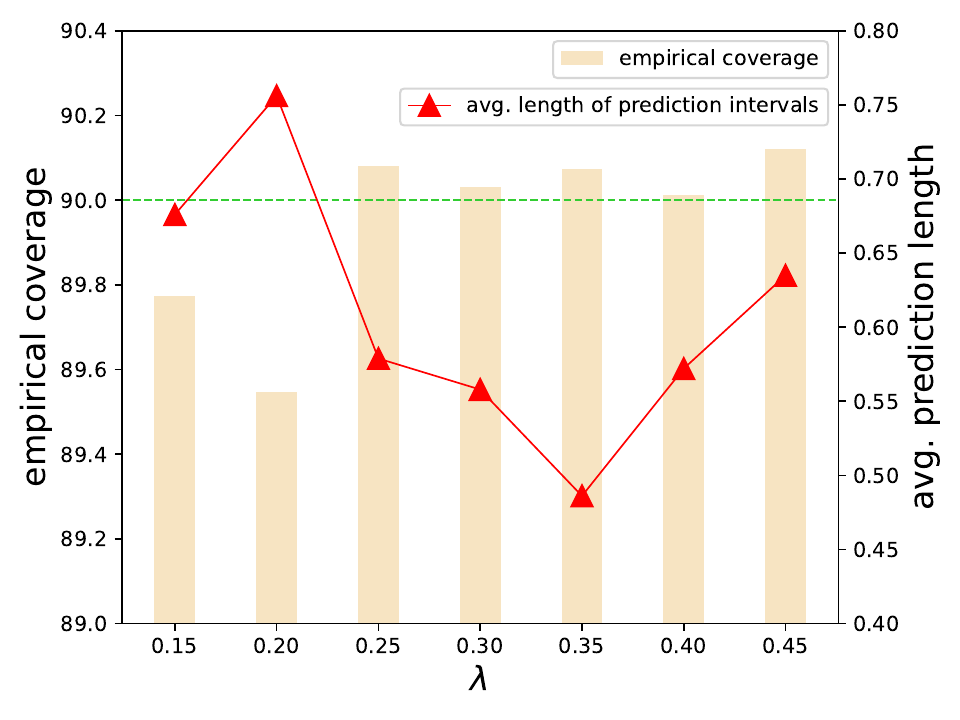}
  \vspace{-15pt}
  \caption{Performance of S-CQR for conformalized link prediction under different $\lambda$ on Rochester38 dataset.}
  %\Description{ablation german.}
  \label{fig:sample}
\end{figure}

\subsection{Comparison to Bayesian-based Uncertainty Quantification Methods}

\begin{table}[]
\centering
\caption{Performance of bayesian-based UQ methods.}
\label{tab:comp}
\resizebox{\columnwidth}{!}{%
\begin{tabular}{clcc}
\hline
\textbf{dataset} &
  \multicolumn{1}{c}{\textbf{method}} &
  \textbf{\begin{tabular}[c]{@{}c@{}}empirical\\ coverage\end{tabular}} &
  \textbf{\begin{tabular}[c]{@{}c@{}}average\\ prediction\\ length\end{tabular}} \\ \hline
\multirow{2}{*}{ogbl-ddi}      & MC Dropout & 0.9905 & 1.9999 \\ \cline{2-4} 
                               & BayesianNN & 0.9240 & 1.9865 \\ \hline
\multirow{2}{*}{german credit} & MC Dropout & 0.8921 & 1.9989 \\ \cline{2-4} 
                               & BayesianNN & 0.8978 & 1.8560 \\ \hline
\multirow{2}{*}{rochester38}   & MC Dropout & 0.9000 & 1.8023 \\ \cline{2-4} 
                               & BayesianNN & 0.9000 & 1.7956 \\ \hline
\end{tabular}%
}
\end{table}

To establish more baselines for comparisons, we further implement two of the most common bayesian-based uncertainty quantification methods: BayesianNN~\cite{gal2016dropout} and Monte Carlo Dropouts~\cite{kendall2017uncertainties}. The results on three datasets are presented in Table~\ref{tab:comp}.

Comparing Table~\ref{tab:comp} with Table~\ref{tab:result1} and Table~\ref{tab:result2}, we can observe that though both two Bayesian-based approaches can achieve or almost achieve the desired coverage rate, they yield much wider prediction intervals compared to our proposed method. This raises concerns about the efficiency of these Bayesian-based UQ methods. Additionally, their computational complexity is another significant concern.

\subsection{Analysis of Parameter $\lambda$}
%\lu{Add one more dataset for this experiment} 
To understand the effect of the hyperparameter $\lambda$ involved in the sampling process of S-CQR on its performance, we further conduct experiments on the Rochester38 dataset (randomly selected) applying different values of $\lambda$.
Adjusting the value of $\lambda$ impacts the density of the sampled graph. Specifically, a higher $\lambda$ yields a denser graph.
Specifically, we vary $\lambda$ among $\{0.45,0.4,0.35,0.3,0.25,0.2,0.15\}$ and perform the S-CQR for the CLP procedure.
The statistics of the resulting sampled graphs are presented in Table~\ref{tab:sample}.
As we can see, by controlling the value of $\lambda$, we can easily obtain edge sets exhibiting varying levels of adherence to the power law. A lower $\lambda$ leads to increased adherence and reduced edge density.

We further measure the performance of S-CQR for CLP, including empirical coverage and the average length of prediction intervals, under varying $\lambda$. The results are shown in Figure~\ref{fig:sample}.
We can observe that while an extremely small value of $\lambda$ tends to yield graphs with degree distributions aligning more closely with the power law, the inevitable decrease in edge density leads to a great loss of structural information. This subsequently results in a degraded CP performance, often manifested as an inability to attain the desired coverage and larger prediction intervals. These observations guide the selection of the optimal value for the parameter $\lambda$ during implementation.

\section{Related Works}
\subsection{Uncertainty Quantification on Graphs}
Graph-based machine learning models, especially in high-stakes scenarios, demand robust uncertainty quantification to avoid potentially costly errors. However, many current GNNs lack reliable uncertainty quantification methods, limiting their practical application. In previous studies, a common approach was adopting Bayesian techniques~\cite{goan2020bayesian,kingma2013auto,lakshminarayanan2017simple}. These methods aimed to obtain a distribution over network weights and quantify uncertainty through the posterior distribution. In the graph context, UAG~\cite{feng2021uag} used Bayesian uncertainty techniques to devise an uncertainty-aware attention mechanism to defend against adversarial attacks on GNNs. B-GCN~\cite{zhang2019bayesian} provided a way to integrate uncertain graph information using a parametric random graph model. GDC~\cite{hasanzadeh2020bayesian} tackled issues like over-smoothing and over-fitting commonly seen in deep GNNs, allowing for learning with uncertainty in graph analysis tasks and ultimately improving downstream task performance. However, Bayesian approaches, while theoretically sound, often encounter computational challenges. Additionally, the approximation methods for derivatives come with practical implementation drawbacks.

In recent years, conformal prediction~\cite{vovk2005algorithmic} has gained notable attention as a simple yet potent approach for producing statistically reliable uncertainty estimates. Nevertheless, conformal prediction has seen limited application in the context of graph-structured data, and the majority of existing studies have primarily focused on tasks related to node-level classification \cite{huang2023uncertainty,clarkson2023distribution,zargarbashi2023conformal} and regression~\cite{huang2023uncertainty}.

\subsection{Conformal Prediction on Graphs %\lu{Since you already talked about UQ on graphs, this section should review conformal prediction in general, e.g., in NLP, graphs, images}
}
Research efforts for applying conformal prediction to graph data have been relatively less. For instance, \cite{clarkson2023distribution} modifies existing conformal classification methods by incorporating network structure to adjust the conformal scores and introduces NAPS, a technique for constructing prediction sets for node classification in an inductive learning setting. Additionally, \cite{pmlr-v202-h-zargarbashi23a} introduces a conformal approach that provides prediction sets with distribution-free guarantees, making use of node-wise homophily in a transductive context. This approach updates conformal scores for each node based on neighborhood diffusion. Furthermore, \cite{huang2023uncertainty} investigates the exchangeability of node information in the transductive setting and introduces a permutation invariance condition that allows the conformal prediction to operate effectively on graph data. They also devise a topology-aware output correction model, CF-GNN, to enhance the efficiency of the conformal prediction procedure.

However, despite the progress in developing conformal prediction methods for node classification and regression, the application of such approaches to link-level tasks on graphs has remained under-explored.
In our research, we propose conformalized link prediction to further extend the conformal prediction procedure to link-level tasks on graphs and demonstrate its validity for link prediction under an inductive setting. Informed by the empirical analyses of synthetic data, we then propose a simple yet effective sampling-based method that leverages the structural properties of graphs to improve the efficiency of the standard conformal prediction pipeline. The key idea is to sample from the original graph to generate a new graph whose degree distribution aligns well with the power law distribution before applying the standard CQR. 
To the best of our knowledge, only one previous study has applied conformal prediction to link prediction on graphs~\cite{marandon2023conformal}. This study, however, conceptualizes the problem differently from our approach, with an emphasis on bounding the false discovery rate in contrast to our focus on bounding the miscoverage rate. This distinction precludes a direct comparison between the two studies.
\section{Conclusion}
In this research, we delve into the newly identified challenge of conformalized link prediction (CLP), which applies the principles of conformal prediction to graph neural network (GNN)-based link prediction tasks. Our focus is on validating the exchangeability assumption critical to CLP, for which we introduce a permutation invariance criterion tailored for link prediction, guaranteeing precise coverage at test time. Utilizing this criterion, we evaluate the feasibility of incorporating a standard conformal prediction framework, such as CQR, into CLP. We note a significant drawback in the direct use of CQR, namely its inefficiency, and uncover a crucial relationship between the graph's adherence to a power law distribution and the efficiency of CQR (i.e., the length of the prediction interval). This insight prompts us to develop a novel sampling-based conformal prediction technique that modifies the graph structure to align with a power law distribution, markedly enhancing the efficiency of conformal prediction. Our experimental findings reveal that this innovative method not only meets the desired coverage levels but also significantly narrows the prediction intervals when compared to existing approaches. Looking ahead, there is potential for creating more sophisticated conformal prediction strategies for CLP and expanding this framework to tackle node-level conformal prediction challenges.

%\section*{Acknowledgements}
\begin{acks}
This material is based upon work supported by the National Science
Foundation (NSF) Grant \#2312862 and a Cisco gift grant.  
\end{acks}

%\lu{add future works. such as extension to node classification}
%Experimental validation on real-world datasets demonstrates its effectiveness.

%%
%% The acknowledgments section is defined using the "acks" environment
%% (and NOT an unnumbered section). This ensures the proper
%% identification of the section in the article metadata, and the
%% consistent spelling of the heading.
%\begin{acks}
%To Robert, for the bagels and explaining CMYK and color spaces.
%\end{acks}

%%
%% The next two lines define the bibliography style to be used, and
%% the bibliography file.
\bibliographystyle{ACM-Reference-Format}
\balance
\bibliography{ref}

%%
%% If your work has an appendix, this is the place to put it.
\appendix
\section{reproducibility}
\label{apdx:exp}

\begin{table}[h!]
\caption{Experimental settings for link prediction models.}
\label{tab:modellp}
\centering
\resizebox{\columnwidth}{!}{%
\begin{tabular}{c|ccccc}
\hline
\textbf{dataset} & \textbf{ogbl-ddi} & \textbf{ogbl-ppa} & \textbf{ogbl-citation2} & \textbf{german credit} & \textbf{rochester38} \\ \hline
training epochs  & 200            & 300            & 50   & 500  & 500  \\
learning rate    & 5e-3           & 1e-2           & 5e-4 & 1e-2 & 1e-2 \\
batch size       & 64$\times$1024 & 64$\times$1024 & 512  & 2048 & 2048 \\
hidden dimension & 128            & 256            & 256  & 128  & 128  \\ \hline
\end{tabular}%
}
\end{table}

\begin{table}[h!]
\centering
\caption{Experimental settings for quantile regression.}
\label{tab:qf}
\resizebox{\columnwidth}{!}{%
\begin{tabular}{c|ccccc}
\hline
\textbf{dataset} & \textbf{ogbl-ddi} & \textbf{ogbl-ppa} & \textbf{ogbl-citation2} & \textbf{german credit} & \textbf{rochester38} \\ \hline
training epochs  & 1000 & 200   & 50   & 200  & 200  \\
learning rate    & 5e-4 & 5e-4 & 5e-4 & 5e-4 & 5e-4 \\
batch size       & 64   & 64   & 64   & 64   & 64   \\
hidden dimension & 64   & 64   & 64   & 64   & 64   \\ \hline
\end{tabular}%
}
\end{table}

The experimental settings for the training of link prediction models and quantile regression are provided in Table~\ref{tab:modellp} and Table~\ref{tab:qf}, respectively.

\end{document}